\pdfoutput=1
\documentclass[11pt]{article}
\usepackage{acl2013}
\makeatletter
\newcommand{\@BIBLABEL}{\@emptybiblabel}
\newcommand{\@emptybiblabel}[1]{}
\makeatother
\usepackage{times}
\usepackage{url}
\usepackage{latexsym}
\usepackage{amsmath}  
\usepackage{verbatim} 
\usepackage{fancyhdr} 
\fancyhf{} 
\fancyfoot[L]{\vspace{0.1mm}{\em This paper was published within the Proceedings of the Seventeenth Conference on Computational Natural Language
Learning, pages 10-19, Sofia, Bulgaria, August 8-9 2013.
\copyright 2013 Association for Computational Linguistics}}
\pagestyle{empty}
\usepackage{hyperref}

\newtheorem{theorem}{Theorem}[section]
\newtheorem{lemma}[theorem]{Lemma}

\newenvironment{proof}[1][Proof]{\begin{trivlist}
\item[\hskip \labelsep {\bfseries #1}]}{\end{trivlist}}

\newcommand{\qed}{\nobreak \ifvmode \relax \else
      \ifdim\lastskip<1.5em \hskip-\lastskip
      \hskip1.5em plus0em minus0.5em \fi \nobreak
      \vrule height0.75em width0.5em depth0.25em\fi}

\title{Analysis of Stopping Active Learning based on Stabilizing Predictions}

\author{Michael Bloodgood \\
  Center for Advanced Study of Language \\
  University of Maryland \\
  College Park, MD 20740 \\
  {\tt meb@umd.edu} \\\And
  John Grothendieck \\
  Raytheon BBN Technologies \\
  9861 Broken Land Parkway, Suite 400 \\
  Columbia, MD 21046 \\
  {\tt jgrothen@bbn.com} \\}

\date{}

\begin{document}

\thispagestyle{fancy}

\maketitle
\begin{abstract}
Within the natural language processing (NLP) community, active learning has been widely investigated and applied in order to alleviate the annotation bottleneck faced by developers of new NLP systems and technologies.
This paper presents the first theoretical analysis of stopping active learning based on stabilizing predictions (SP). 
The analysis has revealed three elements that are central to the success of the SP method: 
(1) bounds on Cohen's Kappa agreement between successively trained models impose bounds on differences in F-measure performance of the models; 
(2) since the stop set does not have to be labeled, it can be made large in practice, helping to guarantee that the results transfer to previously unseen streams of examples 
at test/application time; and 
(3) good (low variance) sample estimates of Kappa between successive models can be obtained.
Proofs of relationships between the level of Kappa agreement and the difference in performance between consecutive models are presented. 
Specifically, if the Kappa agreement between two models exceeds a threshold T (where $T>0$), then the difference in F-measure performance 
between those models is bounded above by $\frac{4(1-T)}{T}$ in all cases. 
If precision of the positive conjunction of the models is assumed to be $p$, then the bound can be tightened to $\frac{4(1-T)}{(p+1)T}$. 
  
\end{abstract}

\section{Introduction}

{\em Active learning} (AL), also called {\em query learning} and {\em selective sampling}, is an approach to reduce
the costs of creating training data that has received considerable 
interest (e.g., 
\cite{argamon-engelson1999,baldridge2008,bloodgood2009a,bloodgood2010,hachey2005,haertel2008,haffari2009b,hwa2000,lewis1994,sassano2002,settles2008,shen2004,thompson1999,tomanek2007,zhu2007}). 

Within the NLP community, active learning has been widely investigated and applied in order to alleviate the annotation bottleneck faced by developers of new NLP systems and technologies.
The main idea is that by
judiciously selecting which examples to have labeled, annotation effort will be focused on the most 
helpful examples and less annotation effort will be required to achieve given levels of performance than 
if a passive learning policy had been used.

Historically, the problem of developing methods for detecting when to stop AL was tabled for future work and the research literature was focused on how to select
which examples to have labeled and analyzing the selection methods \cite{cohn1996,seung1992,freund1997,roy2001}.
However, to realize the savings in annotation effort that AL enables, we must have a method for knowing when to stop the annotation process.  
The challenge is that if we stop too early while useful generalizations are still being made, 
then we can wind up with a model that performs poorly, but if we stop too late after all the useful generalizations are made, then
human annotation effort is wasted and the benefits of using active learning are lost. 

Recently research has begun to develop methods for stopping AL 
\cite{schohn2000,ertekin2007a,ertekin2007b,zhu2007,laws2008,zhu2008a,zhu2008b,vlachos2008,bloodgood2009c,bloodgood2009b,ghayoomi2010}. 
The methods are all heuristics based on estimates of model confidence, error, or stability. 
Although these heuristic methods have appealing intuitions and have had experimental success on a small handful of tasks and datasets, the methods are not widely usable in practice yet because our community's understanding of the stopping methods remains too coarse and inexact.  
Pushing forward on understanding the mechanics of stopping at a more exact level is therefore crucial for achieving the design of widely usable effective stopping criteria.

\newcite{bloodgood2009b} introduce the terminology {\em aggressive} and {\em conservative} to describe the behavior of stopping 
methods\footnote{Aggressive methods stop sooner, aggressively trying to reduce unnecessary annotations while conservative methods are careful not
to risk losing model performance, even if it means annotating many more examples than were necessary.} and
conduct an empirical evaluation of the different published stopping methods on several datasets.  
While most stopping methods tend to behave conservatively, stopping based on stabilizing predictions computed via inter-model Kappa agreement
has been shown to be consistently aggressive without losing 
performance (in terms of F-Measure\footnote{For the rest of this paper, we will use F-measure to denote F1-measure, that is, the balanced 
harmonic mean of precision and recall, which is a standard metric used to evaluate NLP systems.}) in several published empirical tests.  
This method stops when the Kappa agreement between consecutively
learned models during AL exceeds a threshold for three consecutive iterations of AL.
Although this is an intuitive heuristic that has performed well in published experimental results, there has not been any theoretical analysis of the method.

The current paper presents the first theoretical analysis of stopping based on stabilizing predictions. 
The analysis helps to explain at a deeper and more exact level {\em why} the method works as it does.
The results of the analysis help to characterize classes of problems where the method can be expected to work well and where (unmodified) it will not be expected to 
work as well. 
The theory is suggestive of modifications to improve the robustness of the stopping method for certain classes of problems.   
And perhaps most important, the approach that we use in our analysis provides an enabling framework for more precise analysis of stopping criteria and 
possibly other parts of the active learning decision space.

In addition, the information presented in this paper is useful for works that consider switching between 
different active learning strategies and operating regions such as \cite{baram2004,donmez2007,roth2008}. 
Knowing when to switch strategies, for example, is similar to the stopping problem and is another setting where detailed understanding of the variance 
of stabilization estimates and their link to performance ramifications is useful.
More exact understanding of the mechanics of stopping is also useful for applications of co-training \cite{blum1998}, 
and agreement-based co-training \cite{clark2003} in particular.
Finally, the proofs of the Theorems regarding the relationships between Cohen's Kappa statistic and 
F-measure may be of broader use in works that consider inter-annotator
agreement and its ramifications for performance appraisals, a topic that has been of long-standing interest 
in computational linguistics \cite{carletta1996,artstein2008}.

In the next section we summarize the stabilizing predictions (SP) stopping method.
Section~\ref{analysis} analyzes SP and Section~\ref{conclusions} concludes. 

\section{Stopping Active Learning based on Stabilizing Predictions} \label{spSummary}

The intuition behind the SP method is that the models learned during AL can be applied to a large representative set of unlabeled data called a {\em stop set} and 
when consecutively learned models have high agreement on their predictions for classifying the examples in the stop set, 
this indicates that it is time to stop \cite{bloodgood2009b,bloodgood2009c}. 
The active learning stopping strategy explicitly examined in \cite{bloodgood2009b} (after the
general form is discussed) is to calculate Cohen's Kappa agreement statistic
between consecutive rounds of active learning and stop once it is above
0.99 for three consecutive calculations.  

Since the Kappa statistic is an important aspect of this method, we now discuss some background regarding measuring agreement in general, and Cohen's Kappa in particular.
Measurement of agreement between human annotators has received significant attention and in that context, the drawbacks of using 
percentage agreement have been recognized \cite{artstein2008}. Alternative metrics have been proposed that take chance agreement into account.
\newcite{artstein2008} survey several agreement metrics. Most of the agreement metrics they discuss are of the form:
\begin{equation} \label{genericAgreement}
agreement = \frac{A_o - A_e}{1 - A_e},
\end{equation}
where $A_o =$ observed agreement, and $A_e =$ agreement expected by chance.
The different metrics differ in how they compute $A_e$. 
All the instances of usage of an agreement metric in this article will have two categories and two coders. 
The two categories are ``+1" and ``-1" 
and the two coders are the two consecutive models for which agreement is being measured. 

Cohen's Kappa statistic\footnote{We note that there are other agreement measures (beyond Cohen's Kappa) which could also be 
applicable to stopping based on stabilizing predictions, but an analysis of these is outside the scope of the current paper.} \cite{cohen1960} measures agreement expected by chance by modeling each coder (in our case model) with 
a separate distribution governing their likelihood of assigning a particular category. 
Formally, Kappa is defined by Equation~\ref{genericAgreement} with $A_e$ computed as follows:
\begin{equation} \label{expectedAgreement}
A_e = \sum_{k \in \{+1,-1\}} P(k|c_1) \cdot P(k|c_2),
\end{equation}
where each $c_i$ is one of the coders (in our case, models), and $P(k|c_i)$ is the probability that coder (model) $c_i$ labels an instance 
as being in category $k$. Kappa estimates the $P(k|c_i)$ in Equation~\ref{expectedAgreement} based on the proportion of 
observed instances that coder (model) $c_i$ labeled as being in category $k$. 

\section{Analysis} \label{analysis}

This section analyzes the SP stopping method. 
Section~\ref{variance} analyzes the variance of the estimator of Kappa that SP uses and in particular the relationship of this variance to specific aspects of the operationalization of SP, such as the stop set size.
Section~\ref{bounds} analyzes relationships between the Kappa agreement between two models and the difference in F-measure between those two models. 

\subsection{Variance of Kappa Estimator} \label{variance}

SP bases its decision to stop on the information contained in the contingency tables between the classifications of models learned at consecutive iterations during AL.
In determining whether to stop at iteration t, the classifications of the current model $M_t$ are compared with the classifications of the previous model $M_{t-1}$. 
Table~\ref{t:populationProbs} shows the population parameters for these two models, where: population probability $\pi_{ij}$ for $i,j \in \{+,-\}$ is the probability of an example being placed in 
category $i$ by model $M_{t-1}$ and category $j$ by model $M_t$; population probability $\pi_{.j}$ for $j \in \{+,-\}$ is the probability of an example being placed in category $j$ by model $M_t$; 
and population probability $\pi_{i.}$ for $i \in \{+,-\}$ is the probability of an example being placed in category $i$ by model $M_{t-1}$. 
The actual probability of agreement is $\pi_{o} = \pi_{++} + \pi_{--}$. As indicated in Equation~\ref{expectedAgreement}, Kappa models the probability of agreement expected due to chance by assuming that
classifications are made independently. 
Hence, the probability of agreement expected by chance in terms of the population probabilities is $\pi_{e} = \pi_{+.}\pi_{.+} + \pi_{-.}\pi_{.-}$. 
From the definition of Kappa (see Equation~\ref{genericAgreement}), we then have that the Kappa parameter $K$ in terms of the population probabilities is given by 
\begin{equation} \label{populationKappa}
K = \frac{\pi_o - \pi_e}{1 - \pi_e}.
\end{equation}

\begin{table}
\begin{center}
\begin{tabular}{cccc} \hline
          & \multicolumn{2}{c}{$M_{t}$} &            \\ \cline{2-3}
$M_{t-1}$ & +            & -             & Total      \\ \hline 
+         & $\pi_{++}$   & $\pi_{+-}$    & $\pi_{+.}$ \\ 
-         & $\pi_{-+}$   & $\pi_{--}$    & $\pi_{-.}$ \\ \hline
Total     & $\pi_{.+}$   & $\pi_{.-}$    & 1          \\ \hline
\end{tabular}
\end{center}
\caption{\label{t:populationProbs} Contingency table population probabilities for $M_t$ (model learned at iteration t) and $M_{t-1}$ (model learned at iteration t-1).}
\end{table}

For practical applications we will not know the true population probabilities and we will have to resort to using sample estimates.
The SP method uses a stop set of size $n$ for deriving its estimates. 
Table~\ref{t:sampleCounts} shows the contingency table counts for the classifications of models $M_t$ and $M_{t-1}$ on a sample of size $n$.
The population probabilities $\pi_{ij}$ can be estimated by the relative frequencies $p_{ij}$ for $i,j \in \{+,-,.\}$, where: $p_{++} = a/n$; $p_{+-} = b/n$; $p_{-+} = c/n$; $p_{--} = d/n$; 
$p_{+.} = (a+b)/n$; $p_{-.} = (c+d)/n$; $p_{.+} = (a+c)/n$; and $p_{.-} = (c+d)/n$. Let $p_o = p_{++} + p_{--}$, the observed proportion of agreement and let $p_e = p_{+.}p_{.+} + p_{-.}p_{.-}$, the 
proportion of agreement expected by chance if we assume that $M_t$ and $M_{t-1}$ make their classifications independently. Then the Kappa measure of agreement K between $M_t$ and $M_{t-1}$ 
(see Equation~\ref{populationKappa}) is estimated by 
\begin{equation} \label{sampleKappa}
\hat{K} = \frac{p_o - p_e}{1 - p_e}.
\end{equation}

\begin{table}
\begin{center}
\begin{tabular}{cccc} \hline
          & \multicolumn{2}{c}{$M_{t}$} &            \\ \cline{2-3}
$M_{t-1}$ & +            & -             & Total      \\ \hline 
+         & $a$     & $b$      & $a+b$ \\ 
-         & $c$     & $d$      & $c+d$ \\ \hline
Total     & $a+c$   & $b+d$    & $n$          \\ \hline
\end{tabular}
\end{center}
\caption{\label{t:sampleCounts} Contingency table counts for $M_t$ (model learned at iteration t) and $M_{t-1}$ (model learned at iteration t-1).}
\end{table}

Using the delta method, as described in \cite{bishop1975}, \newcite{fleiss1969} derived an estimator of the large-sample variance of $\hat{K}$.  
According to \newcite{hale1993}, the estimator simplifies to
 
\begin{equation} \label{sampleKappaVariance}
\begin{split}
& Var(\hat{K}) = \frac{1}{n(1-p_e)^2} \times \\
& \Bigg\{ \sum_{i \in \{+,-\}}{p_{ii}[1-4\bar{p}_i(1-\hat{K})]} \\
& -(\hat{K} - p_e(1-\hat{K}))^2 + (1-\hat{K})^2 \times \\
& \sum_{i,j \in \{+,-\}}{p_{ij}[2 (\bar{p}_i + \bar{p}_j) - (p_{i.} + p_{.j})]^2} \Bigg\}, 
\end{split}
\end{equation}  
where $\bar{p}_i = (p_{i.} + p_{.i})/2$. From Equation~\ref{sampleKappaVariance}, we can see that the variance of our estimate of Kappa is inversely proportional to the 
size of the stop set we use.

\newcite{bloodgood2009b} used a stop set of size 2000 for each of their datasets. 
Although this worked well in the results they reported, we do not believe that 2000 is a fixed size that will work well 
for all tasks and datasets where the SP method could be used. 
Table~\ref{t:kappaVariances} shows the variances of $\hat{K}$ computed 
using Equation~\ref{sampleKappaVariance} at the points at which SP stopped AL for each of 
the datasets\footnote{We note that each of the datasets was set up as a binary 
classification task (or multiple binary classification tasks). Further details and descriptions of each 
of the datasets can be found in \cite{bloodgood2009b}.} from \cite{bloodgood2009b}.

\begin{table*}[t]
\begin{center}
\begin{tabular}{|l|c|} \hline
Task-Dataset & Variance of $\hat{K}$ \\ \hline
NER-DNA (10-fold CV)          & 0.0000223 \\ \hline
NER-cellType (10-fold CV)     & 0.0000211 \\ \hline
NER-protein (10-fold CV)      & 0.0000074 \\ \hline
Reuters (10 Categories)       & 0.0000298 \\ \hline
20 Newsgroups (20 Categories) & 0.0000739 \\ \hline
WebKB Student (10-fold CV)    & 0.0000137 \\ \hline
WebKB Project (10-fold CV)    & 0.0000190 \\ \hline
WebKB Faculty (10-fold CV)    & 0.0000115 \\ \hline
WebKB Course (10-fold CV)     & 0.0000179 \\ \hline
TC-spamassassin (10-fold CV)  & 0.0000042 \\ \hline
TC-TREC-SPAM (10-fold CV)     & 0.0000043 \\ \hline
Average (macro-avg)           & 0.0000209 \\ \hline
\end{tabular}
\end{center}
\caption{\label{t:kappaVariances} Estimates of the variance of $\hat{K}$. For each dataset, the estimate of the variance of $\hat{K}$ is computed (using Equation~\ref{sampleKappaVariance}) from the 
contingency table at the point at which SP stopped AL and the average of all the variances (across all folds of CV) is displayed. 
The last row contains the macro-average of the average variances for all the datasets.}
\end{table*}

These variances indicate
that the size of 2000 was typically sufficient to get tight estimates of
Kappa, helping to illuminate the empirical success of the SP method on
these datasets. More generally, the SP method can be augmented with a
variance check: if the variance of estimated Kappa at a potential
stopping point exceeds some desired threshold, then the stop set size can be
increased as needed to reduce the variance.

Looking at Equation~\ref{sampleKappaVariance} again, one can note that when $p_e$ is relatively close to 1, the variance of $\hat{K}$ can be expected to get quite large. In these situations, users of SP should expect to have to use larger stop set sizes and in extreme conditions, SP may not be an advisable method to use.

\subsection{Relationship between Kappa agreement and change in performance between models} \label{bounds}
Heretofore, the published literature contained only informal explanations of why stabilizing predictions is expected to work well as a stopping method (along with 
empirical tests demonstrating successful operation on a handful of tasks and datasets).
In the remainder of this section we describe the mathematical foundations for stopping methods based on stabilizing predictions.  
In particular, we will prove that even in the worst possible case, if the Kappa agreement between two subsequently learned models is 
greater than a threshold $T$, 
then it must be the case that the change in performance between these two models is bounded above by $\frac{4(1-T)}{T}$.  
We then go on to prove additional Theorems that tighten this bound when assumptions are made about model precision.

\begin{lemma} \label{FGreaterK}
Suppose F-measure $F$ and Kappa $K$ are computed from the same contingency table of counts, such as the one given in Table~\ref{t:sampleCounts}. 
Suppose $ad-bc \ge 0$. Then $F \ge K$. 
\end{lemma}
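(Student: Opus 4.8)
The plan is to reduce both $F$ and $K$ to explicit rational functions of the four cell counts $a,b,c,d$ (with $n=a+b+c+d$) and then compare them directly by clearing denominators. Reading $M_{t-1}$ as the reference coder (rows) and $M_t$ as the predicted coder (columns) — the comparison is symmetric in $b$ and $c$, so this choice is immaterial — we have precision $a/(a+c)$ and recall $a/(a+b)$, and hence $F = \frac{2a}{2a+b+c}$.

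For $K$, I would start from Equation~\ref{sampleKappa} and put $p_o-p_e$ and $1-p_e$ over the common denominator $n^2$. The computation turns on the algebraic identity $n^2 - (a+b)(a+c) - (c+d)(b+d) = (b+c)\,n + 2(ad-bc)$, which gives $1-p_e = \frac{(b+c)n + 2(ad-bc)}{n^2}$; combined with $1-p_o = (b+c)/n$ this also yields $p_o - p_e = \frac{2(ad-bc)}{n^2}$. Therefore $K = \frac{2(ad-bc)}{(b+c)n + 2(ad-bc)}$. Under the hypothesis $ad-bc\ge 0$ this displays $K$ as a ratio of nonnegative quantities with $0\le K\le 1$, and (outside the degenerate contingency tables where a denominator vanishes, handled by a short separate check) both $\frac{2a}{2a+b+c}$ and this form of $K$ have strictly positive denominators.

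It then suffices to cross-multiply: $F\ge K$ is equivalent to $2a\big[(b+c)n + 2(ad-bc)\big] \ge 2(ad-bc)(2a+b+c)$. Expanding, the $4a(ad-bc)$ terms cancel on both sides, and the difference of the two sides simplifies to $2(b+c)\big[an - (ad-bc)\big] = 2(b+c)\big[a(a+b+c)+bc\big]$, which is manifestly $\ge 0$ since $a,b,c\ge 0$. This proves $F\ge K$.

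I expect the only real obstacle to be the bookkeeping in the second step — coaxing $K$ into the compact closed form $\frac{2(ad-bc)}{(b+c)n+2(ad-bc)}$ via the identity above — together with a quick case analysis of degenerate tables (e.g. $a=b=c=0$, or all mass on a single marginal, where $F$ or $K$ fails to be defined). Once the closed form is in hand the inequality collapses to the nonnegative expression $2(b+c)[a(a+b+c)+bc]$, so no further cleverness is required.
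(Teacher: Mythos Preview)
Your proposal is correct and follows essentially the same strategy as the paper: write $F$ and $K$ explicitly as rational functions of the contingency counts and compare them algebraically, ultimately reducing the inequality to the nonnegativity of an expression equivalent to $(b+c)(a+b)(a+c)$. The only difference is in execution: the paper multiplies $F$ by $\frac{d-bc/a}{d-bc/a}$ to match numerators with $K$ and then compares denominators, whereas you cross-multiply directly; your route is slightly cleaner since it avoids the implicit assumptions $a\neq 0$ and $ad\neq bc$ that the paper's manipulation introduces (and which would otherwise require the separate degenerate-case check you anticipated).
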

\begin{proof}
By definition, in terms of the contingency table counts, 
\begin{equation} \label{KDef}
K = \frac{2ad-2bc}{(a+b)(b+d)+(a+c)(c+d)} 
\end{equation}
and 
\begin{equation} \label{FDef}
F = \frac{2a}{2a+b+c}. 
\end{equation}
Rewriting $F$ so that it will have the same numerator as $K$, we have: 
\begin{eqnarray}
F & = & F \Bigg( \frac{d-\frac{bc}{a}}{d-\frac{bc}{a}} \Bigg) \\
  & = & \Big( \frac{2a}{2a+b+c} \Big) \Bigg( \frac{d-\frac{bc}{a}}{d-\frac{bc}{a}} \Bigg) \\
  & = & \frac{2ad-2bc}{2ad+bd+cd-2bc-\frac{b^2c + bc^2}{a}}. \label{FSameNumeratorK} 
\end{eqnarray}
We can see that the expression for $F$ in Equation~\ref{FSameNumeratorK} has the same numerator as $K$ in Equation~\ref{KDef}  
but the denominator of $K$ in Equation~\ref{KDef} is $\ge$ the denominator of $F$ in Equation~\ref{FSameNumeratorK}. 
Therefore, $F \ge K$. \qed 
\end{proof}

\begin{theorem} \label{KBoundDeltaF}
Let $M_t$ be the model learned at iteration $t$ of active learning and $M_{t-1}$ be the model learned at iteration $t-1$. 
Let $K_t$ be the estimate of Kappa agreement between the classifications of $M_t$ and $M_{t-1}$ on the examples in the stop set.
Let $\tilde{F}_t$ be the F-measure between the classifications of $M_t$ and truth on the stop set. 
Let $\tilde{F}_{t-1}$ be the F-measure between the classifications of $M_{t-1}$ and truth on the stop set. 
Let $\Delta F_t$ be $\tilde{F}_t - \tilde{F}_{t-1}$. 
Suppose $T>0$.
Then $K_t > T \Rightarrow |\Delta F_t| \le \frac{4(1-T)}{T}$. 
\end{theorem}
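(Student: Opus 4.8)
The plan is to route everything through Lemma~\ref{FGreaterK} together with a triangle-inequality comparison of F-measures. Since Cohen's Kappa never exceeds $1$, if $T\ge 1$ the hypothesis $K_t>T$ is unsatisfiable and the implication holds vacuously, so I may assume $0<T<1$, where $\frac{4(1-T)}{T}>0$ and $2-T\ge T>0$. Write $a,b,c,d$ for the counts of the contingency table of $M_{t-1}$ against $M_t$ on the stop set (Table~\ref{t:sampleCounts}), and let $F^{\ast}:=\frac{2a}{2a+b+c}$ be the F-measure between the classifications of $M_t$ and $M_{t-1}$ on the stop set; this is symmetric in $b\leftrightarrow c$, hence in which model is treated as ``truth''. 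For $K_t$ to be a well-defined real number the denominator $(a+b)(b+d)+(a+c)(c+d)$ of $\hat{K}$ must be positive, so $K_t>0$ forces its numerator $2ad-2bc$ to be positive, i.e.\ $ad-bc\ge 0$. Lemma~\ref{FGreaterK} then gives $F^{\ast}\ge K_t>T$.

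The core step is to bound $|\Delta F_t|$ by a function of $F^{\ast}$. Let $A$, $B_t$, $B_{t-1}$ be the sets of stop-set examples that are truly positive, classified positive by $M_t$, and classified positive by $M_{t-1}$, respectively, so $\tilde{F}_t=F(B_t,A)$, $\tilde{F}_{t-1}=F(B_{t-1},A)$, $F^{\ast}=F(B_t,B_{t-1})$, where $F(X,Y)=\frac{2|X\cap Y|}{|X|+|Y|}$. From $|X|+|Y|=|X\cup Y|+|X\cap Y|$ one gets $F(X,Y)=\phi(d(X,Y))$, where $d(X,Y):=1-\frac{|X\cap Y|}{|X\cup Y|}$ is the Jaccard distance and $\phi(u):=\frac{2(1-u)}{2-u}$. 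The Jaccard distance is a metric (Marczewski--Steinhaus), so $|d(B_t,A)-d(B_{t-1},A)|\le d(B_t,B_{t-1})$; and $|\phi'(u)|=\frac{2}{(2-u)^2}\le 2$ on $[0,1]$, so the mean value theorem yields $|\Delta F_t|=|\phi(d(B_t,A))-\phi(d(B_{t-1},A))|\le 2\,d(B_t,B_{t-1})$. Since $\phi$ is an involution of $[0,1]$, $d(B_t,B_{t-1})=\phi(F^{\ast})=\frac{2(1-F^{\ast})}{2-F^{\ast}}$, hence $|\Delta F_t|\le\frac{4(1-F^{\ast})}{2-F^{\ast}}$. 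Because $v\mapsto\frac{4(1-v)}{2-v}$ is decreasing and $F^{\ast}>T$, this is $<\frac{4(1-T)}{2-T}\le\frac{4(1-T)}{T}$, the last inequality using $0<T\le 1$; so $|\Delta F_t|\le\frac{4(1-T)}{T}$, in fact a little better.

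I expect the main obstacle to be exactly this middle step: turning high agreement between $M_t$ and $M_{t-1}$ into closeness of their F-measures against the \emph{unknown} truth. The device that unlocks it is recognizing $1-F$ as a monotone reparametrization of the Jaccard distance, which is a genuine metric; this simultaneously supplies the triangle inequality and an explicit Lipschitz constant, and the only delicate part is tracking the two factors of $2$ so they combine to exactly $4$. Degenerate inputs (a stop set with no positive examples, or a model that is constant on it, so that some F-measure or Kappa is a formal $\tfrac{0}{0}$) are handled either by the convention $\tfrac{0}{0}:=1$ --- consistent with $\phi(0)=1$ and $d(\emptyset,\emptyset):=0$ --- or by noting that then $\Delta F_t=0$ or else $K_t$ is undefined and the hypothesis fails.
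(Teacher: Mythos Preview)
Your argument is correct, and it takes a genuinely different route from the paper's. The paper proceeds by splitting the $M_{t-1}$-versus-$M_t$ contingency table according to the (unknown) truth labels into two subtables, writes $\tilde F_t$ and $\tilde F_{t-1}$ explicitly in terms of those eight subcounts, and then carries out a lengthy algebraic reduction to reach $|\Delta F_t|\le\frac{4(1-T)}{T}\cdot\frac{a}{h+d_A+d_B}$ before bounding the last factor by $1$. You instead exploit the identity $F(X,Y)=\phi(d(X,Y))$ with $\phi(u)=\tfrac{2(1-u)}{2-u}$ and the fact that the Jaccard distance $d$ is a metric; the triangle inequality plus the Lipschitz bound $|\phi'|\le 2$ then give $|\Delta F_t|\le 2\,d(B_t,B_{t-1})=\tfrac{4(1-F^\ast)}{2-F^\ast}$ immediately. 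Your route is shorter, more conceptual, and in fact yields the unconditionally sharper bound $\tfrac{4(1-T)}{2-T}\le\tfrac{4(1-T)}{T}$. What the paper's explicit decomposition buys, on the other hand, is the factor $\tfrac{a}{h+d_A+d_B}$ left exposed at the end, which it can then tighten using information about the precision $p$ of the positive conjunction to obtain $\tfrac{4(1-T)}{(p+1)T}$ (Theorems~\ref{KBoundDeltaFAssumingPerfectPrecision} and~\ref{KBoundDeltaFAssumingPrecisionP}); your metric argument does not directly expose that handle, so the precision-dependent refinements would require a separate idea.
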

\begin{proof}
Suppose $M_t$, $M_{t-1}$, $K_t$, $\tilde{F}_t$, $\tilde{F}_{t-1}$, $\Delta F_t$, and $T$ are defined as stated in the statement of Theorem~\ref{KBoundDeltaF}. 
Let $F_t$ be the F-measure between the classifications of $M_t$ and $M_{t-1}$ on the examples in the stop set. 
Let Table~\ref{t:sampleCounts} show the contingency table counts for $M_t$ versus $M_{t-1}$ on the examples in the stop set.
Then, from their definitions, we have $K_t = \frac{2(ad-bc)}{(a+b)(b+d)+(a+c)(c+d)}$ and $F_t = \frac{2a}{2a+b+c}$. 
There exist true labels for the examples in the stop set, which we don't know since the stop set is unlabeled, but nonetheless must exist. 
We use the truth on the stop set to split Table~\ref{t:sampleCounts} into two subtables of counts, one table for all the examples that are truly positive and one table for all the examples that are truly negative.
Table~\ref{t:positives} shows the contingency table for $M_t$ versus $M_{t-1}$ for all of the examples in the stop set that have true labels of +1 and  
Table~\ref{t:negatives} shows the contingency table for $M_t$ versus $M_{t-1}$ for all of the examples in the stop set that have true labels of -1. 
\begin{table}
\begin{center}
\begin{tabular}{cccc} \hline
          & \multicolumn{2}{c}{$M_{t}$} &            \\ \cline{2-3}
$M_{t-1}$ & +            & -             & Total      \\ \hline 
+         & $a_1$     & $b_1$      & $a_1+b_1$ \\ 
-         & $c_1$     & $d_1$      & $c_1+d_1$ \\ \hline
Total     & $a_1+c_1$   & $b_1+d_1$    & $n_1$          \\ \hline
\end{tabular}
\end{center}
\caption{\label{t:positives} Contingency table counts for $M_t$ (model learned at iteration t) versus $M_{t-1}$ (model learned at iteration t-1) for only the examples in the stop set that have truth = +1.}
\end{table}
\begin{table}
\begin{center}
\begin{tabular}{cccc} \hline
          & \multicolumn{2}{c}{$M_{t}$} &            \\ \cline{2-3}
$M_{t-1}$ & +            & -             & Total      \\ \hline 
+         & $a_{-1}$     & $b_{-1}$      & $a_{-1}+b_{-1}$ \\ 
-         & $c_{-1}$     & $d_{-1}$      & $c_{-1}+d_{-1}$ \\ \hline
Total     & $a_{-1}+c_{-1}$   & $b_{-1}+d_{-1}$    & $n_{-1}$          \\ \hline
\end{tabular}
\end{center}
\caption{\label{t:negatives} Contingency table counts for $M_t$ (model learned at iteration t) versus $M_{t-1}$ (model learned at iteration t-1) for only the examples in the stop set that have truth = -1.}
\end{table}

From Tables~\ref{t:sampleCounts}, \ref{t:positives}, and \ref{t:negatives} one can see that $a$ is the number of examples in the stop set that both $M_t$ and $M_{t-1}$ classified as positive. 
Furthermore, out of these $a$ examples, $a_1$ of them truly are positive and $a_{-1}$ of them truly are negative. Similar explanations hold for the other counts.
Also, from Tables~\ref{t:sampleCounts}, \ref{t:positives}, and \ref{t:negatives}, one can see that the equalities $a=a_1+a_{-1}$, $b=b_1+b_{-1}$, $c=c_1+c_{-1}$, and $d=d_1+d_{-1}$ all hold. 
The contingency tables for $M_t$ versus truth and $M_{t-1}$ versus truth can be derived from Tables~\ref{t:positives} and \ref{t:negatives}. 
For convenience, Table~\ref{t:MtVersusTruth} shows the contingency table for $M_t$ versus truth 
and Table~\ref{t:Mt-1VersusTruth} shows the contingency table for $M_{t-1}$ versus truth. 
\begin{table}
\begin{center}
\begin{tabular}{cccc} \hline
          & \multicolumn{2}{c}{$M_{t}$}  &            \\ \cline{2-3}
Truth     & +               & -               & Total      \\ \hline 
+         & $a_1+c_1$       & $b_1+d_1$       & $n_1$ \\ 
-         & $a_{-1}+c_{-1}$ & $b_{-1}+d_{-1}$ & $n_{-1}$ \\ \hline
Total     & $a+c$           & $b+d$           & $n$          \\ \hline
\end{tabular}
\end{center}
\caption{\label{t:MtVersusTruth} Contingency table counts for $M_t$ (model learned at iteration t) versus truth. (Derived from Tables~\ref{t:positives} and \ref{t:negatives}}
\end{table}
\begin{table}
\begin{center}
\begin{tabular}{cccc} \hline
          & \multicolumn{2}{c}{$M_{t-1}$}  &            \\ \cline{2-3}
Truth     & +               & -               & Total      \\ \hline 
+         & $a_1+b_1$       & $c_1+d_1$       & $n_1$ \\ 
-         & $a_{-1}+b_{-1}$ & $c_{-1}+d_{-1}$ & $n_{-1}$ \\ \hline
Total     & $a+b$           & $c+d$           & $n$          \\ \hline
\end{tabular}
\end{center}
\caption{\label{t:Mt-1VersusTruth} Contingency table counts for $M_{t-1}$ (model learned at iteration t-1) versus truth. (Derived from Tables~\ref{t:positives} and \ref{t:negatives}}
\end{table}
Suppose that $K_t > T$. This implies, by Lemma~\ref{FGreaterK}\footnote{Note that the condition $ad-bc \ge 0$ of Lemma~\ref{FGreaterK} is met since $K_t > T$ and $T > 0$ imply
$K_t > 0$, which in turn implies $ad - bc > 0$.}, that $F_t > T$. 
This implies that
\begin{eqnarray}
            & \frac{2a}{2a+b+c} > T  \\
\Rightarrow & 2a > (2a+b+c)T \label{2aGreaterThan2aPlusbPluscT}  \\
\Rightarrow & 2a(1-T) > (b+c)T  \\
\Rightarrow & b+c < \frac{2a(1-T)}{T}. \label{bPluscBound} 
\end{eqnarray}
Note that Equations~\ref{2aGreaterThan2aPlusbPluscT} and \ref{bPluscBound} are justified since $2a+b+c>0$ and $T>0$, respectively.

From Table~\ref{t:MtVersusTruth} we can see that ${\tilde{F}_t = \frac{2(a_1+c_1)}{2(a_1+c_1)+b_1+d_1+a_{-1}+c_{-1}}}$; 
from Table~\ref{t:Mt-1VersusTruth} we can see that $\tilde{F}_{t-1} = \frac{2(a_1+b_1)}{2(a_1+b_1)+c_1+d_1+a_{-1}+b_{-1}}$. 
For notational convenience, let: $g = 2(a_1+c_1)+b_1+d_1+a_{-1}+c_{-1}$; and $h = 2(a_1+b_1)+c_1+d_1+a_{-1}+b_{-1}$.

It follows that 
\begin{align} 
\Delta F_t &=  \frac{2(a_1+c_1)}{g} - \frac{2(a_1+b_1)}{h} \\ 
&= \frac{(2a_1+2c_1)h - (2a_1+2b_1)g}{gh} \label{part1}
\end{align} 

For notational convenience, let: $x=2(a_1c_1+a_1b_{-1}+c_1^2+c_1d_1+c_1a_{-1}+c_1b_{-1})$; and $y=2(a_1b_1+a_1c_{-1}+b_1^2+b_1d_1+b_1a_{-1}+b_1c_{-1})$.
Then picking up from Equation~\ref{part1}, it follows that
\begin{align}
\Delta F_t &= \frac{x - y}{gh} \\
&=  \frac{2[u_1+c_1u_2-b_1u_3]}{gh},
\end{align}
where $u_1=a_1c_1-a_1b_1+a_1b_{-1}-a_1c_{-1}$, $u_2=c_1+d_1+a_{-1}+b_{-1}$, and $u_3=b_1+d_1+a_{-1}+c_{-1}$.

For notational convenience, let: $d_A = c_1-b_1$ and 
$d_B = c_{-1}-b_{-1}$.
Then it follows that
\begin{equation}
\Delta F_t = \frac{2u_4}{gh},
\end{equation}
where: $u_4=a_1(d_A-d_B)+d_A(d_1+a_{-1}+b_1+c_1)+c_1b_{-1}-b_1c_{-1}$.

Noting that $g = h+d_A+d_B$, we have
\begin{equation}
\Delta F_t = \frac{2u_4}{h(h+d_A+d_B)}.
\end{equation}

Noting that $2u_4 = 2[d_A(a_1+b_1+c_1+d_1+a_{-1}+b_{-1})-d_B(a_1+b_1)]$ and letting $u_5 = a_1+b_1+c_1+d_1+a_{-1}+b_{-1}$, we have
\begin{equation}
\Delta F_t = \frac{2[d_Au_5-d_B(a_1+b_1)]}{h(h+d_A+d_B)}.
\end{equation}

Therefore, 
\begin{equation} \label{worstPossibleCase}
\begin{split}
&|\Delta F_t| \le 2\Bigg(\left|\frac{d_Au_5}{h(h+d_A+d_B)}\right| \\
&+ \left|\frac{d_B(a_1+b_1)}{h(h+d_A+d_B)}\right|\Bigg)
\end{split}
\end{equation}

Recall that $b+c=b_1+b_{-1}+c_1+c_{-1}$. Then observe that the following three inequalities hold: $b+c \ge d_A$; $b+c \ge d_B$; and $h(h+d_A+d_B) > 0$. 
Therefore, 
\begin{eqnarray}
|\Delta F_t| \le & \frac{2(b+c)[2a_1+2b_1+c_1+d_1+a_{-1}+b_{-1}]}{h(h+d_A+d_B)} \label{bPluscLoose} \\
= & \frac{2(b+c)h}{h(h+d_A+d_B)}  \\
= & \frac{2(b+c)}{h+d_A+d_B}  \\
\le & \frac{2(2a)(1-T)}{T(h+d_A+d_B)} \label{usingbPluscBound} \\
= & \big( \frac{4(1-T)}{T} \big) \big( \frac{a}{h+d_A+d_B} \big).  \label{rightBeforeFinalResult} 
\end{eqnarray}
Observe that $h+d_A+d_B = 2a_1+b_1+2c_1+d_1+a_{-1}+c_{-1}$. Therefore, $\frac{a}{h+d_A+d_B} \le 1$. 
Therefore, we have
\begin{equation} \label{finalResult}
|\Delta F_t| \le \frac{4(1-T)}{T}.  \; \qed
\end{equation} 
\end{proof}

Note that in deriving Inequality~\ref{usingbPluscBound}, we used the previously derived Inequality~\ref{bPluscBound}. 
Also, the proof of Theorem~\ref{KBoundDeltaF} assumes a worst possible case in the sense that all examples where the classifications of $M_t$ and $M_{t-1}$ differ 
are assumed to have truth values that all serve to maximize one model's F-measure and 
minimize the other model's F-measure so as to maximize $|\Delta F_t|$ as much as possible.
A resulting limitation is that the bound is loose in many cases. 
It may be possible to derive tighter bounds, perhaps by easing off to an expected case instead of a worst case 
and/or by making additional assumptions.\footnote{If one is planning to undertake this challenge, we would suggest 
further consideration of Inequalities~\ref{worstPossibleCase}, \ref{bPluscLoose}, \ref{usingbPluscBound}, and \ref{finalResult} as a 
possible starting point.} 

Taking this possibility up, we now prove tighter bounds when assumptions about the precision of the models $M_t$ and $M_{t-1}$ are made. 
Consider that in the proof of Theorem~\ref{KBoundDeltaF} when transitioning from Equality~\ref{rightBeforeFinalResult} to Inequality~\ref{finalResult}, we used 
the fact that $\frac{a}{h+d_A+d_B} \le 1$. Note that $\frac{a}{h+d_A+d_B} = \frac{a}{2a_1+b_1+2c_1+d_1+a_{-1}+c_{-1}}$, from which one sees 
that $\frac{a}{h+d_A+d_B} = 1$ only if all of $a_1,b_1,c_1,d_1$ and $c_{-1}$ are all zero. This is a pathological case. 
In many practically important classes of cases to consider, $\frac{a}{h+d_A+d_B}$ will be strictly less than $1$, and often substantially less than $1$. 
The following two Theorems prove tighter bounds on $|\Delta F_t|$ than Theorem~\ref{KBoundDeltaF} by utilizing this insight.
 
\begin{theorem} \label{KBoundDeltaFAssumingPerfectPrecision}
Suppose $M_t$, $M_{t-1}$, $K_t$, $\tilde{F}_t$, $\tilde{F}_{t-1}$, $\Delta F_t$, and $T$ are defined as stated in the statement of Theorem~\ref{KBoundDeltaF}. 
Let the contingency tables be defined as they were in the proof of Theorem~\ref{KBoundDeltaFAssumingPerfectPrecision}. 
Let $M_{PositiveConjunction}$ be a model that only classifies an example as positive if both models $M_t$ and $M_{t-1}$ classify the example as positive. 
Suppose that $M_{PositiveConjunction}$ has perfect precision on the stop set, or in other words that every single example from the stop set 
that both $M_t$ and $M_{t-1}$ classify as positive is truthfully positive (i.e., $a_{-1}=0$). 
Then $K_t > T \Rightarrow |\Delta F_t| \le \frac{2(1-T)}{T}$. 
\end{theorem}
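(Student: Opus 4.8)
The plan is to rerun the proof of Theorem~\ref{KBoundDeltaF} essentially verbatim, invoking the perfect-precision hypothesis only at the single place where that proof was wasteful. Recall that in the proof of Theorem~\ref{KBoundDeltaF}, the assumption $K_t > T$ (together with $T>0$, so $K_t>0$, so $ad-bc>0$) lets Lemma~\ref{FGreaterK} conclude $F_t > T$, and from there one obtains $b+c < \frac{2a(1-T)}{T}$ (Inequality~\ref{bPluscBound}). All of the subsequent algebraic rewriting of $\Delta F_t$ in terms of the subtable counts $a_1,b_1,c_1,d_1,a_{-1},b_{-1},c_{-1},d_{-1}$, the introduction of $d_A$, $d_B$, $g$, $h$, $u_5$, and the worst-case split via the triangle inequality all carry over unchanged, culminating (at Equality~\ref{rightBeforeFinalResult}) in
\[
|\Delta F_t| \;\le\; \Big(\tfrac{4(1-T)}{T}\Big)\Big(\tfrac{a}{h+d_A+d_B}\Big),
\]
where $a = a_1 + a_{-1}$ and $h+d_A+d_B = 2a_1 + b_1 + 2c_1 + d_1 + a_{-1} + c_{-1}$.

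The improvement comes from bounding the second factor more carefully than by $1$. Substituting the hypothesis $a_{-1}=0$ collapses the numerator to $a = a_1$ and the denominator to $2a_1 + b_1 + 2c_1 + d_1 + c_{-1}$. Since $b_1,c_1,d_1,c_{-1}$ are nonnegative counts, the denominator is at least $2a_1$, and hence
\[
\frac{a}{h+d_A+d_B} \;=\; \frac{a_1}{2a_1 + b_1 + 2c_1 + d_1 + c_{-1}} \;\le\; \frac{1}{2}.
\]
Feeding this back into the displayed inequality yields $|\Delta F_t| \le \frac{4(1-T)}{T}\cdot\frac{1}{2} = \frac{2(1-T)}{T}$, which is the assertion.

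Honestly there is no serious obstacle here; the hard analytic work was already done in Theorem~\ref{KBoundDeltaF}, and what remains is to record a sharper constant. The one point worth being explicit about is the degenerate subcase $a_1 = 0$ (equivalently $a=0$): there the ratio $\frac{a_1}{2a_1 + b_1 + 2c_1 + d_1 + c_{-1}}$ still does not exceed $\frac{1}{2}$ as long as its denominator is positive, and positivity of that denominator --- which is exactly $g = h+d_A+d_B$, the denominator of $\tilde{F}_t = \frac{2(a_1+c_1)}{g}$ --- is already presupposed whenever $\tilde{F}_t$ is meaningfully defined, so nothing new is needed. I would also remark (though it is beyond what the present statement asks) that the identical argument under the weaker hypothesis that $M_{PositiveConjunction}$ has precision $p$ on the stop set, i.e.\ $a_{-1} = (1-p)\,a$, makes the denominator at least $2a_1 + a_{-1} = (p+1)a$, giving the more general bound $|\Delta F_t| \le \frac{4(1-T)}{(p+1)T}$, of which this theorem is the $p=1$ instance.
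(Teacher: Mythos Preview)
Your proposal is correct and follows exactly the same approach as the paper: reuse the proof of Theorem~\ref{KBoundDeltaF} through Equality~\ref{rightBeforeFinalResult}, then invoke $a_{-1}=0$ to sharpen the bound on $\frac{a}{h+d_A+d_B}$ from $1$ to $\frac{1}{2}$. You are in fact more explicit than the paper about why the ratio is at most $\frac{1}{2}$ and about the degenerate case $a_1=0$, and your closing remark anticipating the precision-$p$ generalization is precisely Theorem~\ref{KBoundDeltaFAssumingPrecisionP}.
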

\begin{proof}
The proof of Theorem~\ref{KBoundDeltaF} holds exactly as it is up until Equality~\ref{rightBeforeFinalResult}. 
Now, using the additional assumption that $a_{-1}=0$, we have $\frac{a}{h+d_A+d_B} \le \frac{1}{2}$. 
Therefore, we have 
\begin{equation} \label{finalResultAssumingPerfectPrecision}
|\Delta F_t| \le \frac{2(1-T)}{T}.  \; \qed
\end{equation} 
\end{proof}

Theorem~\ref{KBoundDeltaFAssumingPerfectPrecision} is a special case (in the limit) of a more general Theorem. 
Before stating and proving the more general Theorem, we prove a Lemma that will be helpful in making the proof of the general Theorem clearer. 

\begin{lemma} \label{helperLemma}
Let $f$, $d_A$, $d_B$ and contingency table counts be defined as they were in the proof of Theorem~\ref{KBoundDeltaF}. 
Suppose $a_1 = xa_{-1}$. Then $\frac{a}{h+d_A+d_B} \le \frac{x+1}{2x+1}$.
\end{lemma}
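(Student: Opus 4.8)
The plan is to reduce the claim to an elementary inequality between nonnegative integers by plugging the explicit expressions already derived in the proof of Theorem~\ref{KBoundDeltaF} into the fraction $\frac{a}{h+d_A+d_B}$. Recall from that proof that $a = a_1 + a_{-1}$ and that, as observed just before Inequality~\ref{finalResult}, $h + d_A + d_B = 2a_1 + b_1 + 2c_1 + d_1 + a_{-1} + c_{-1}$. So $\frac{a}{h+d_A+d_B}$ is a ratio of two nonnegative combinations of the cell counts of Tables~\ref{t:positives} and \ref{t:negatives}, and the whole task is to see how it behaves once the constraint $a_1 = x a_{-1}$ is imposed.

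First I would substitute $a_1 = x a_{-1}$ into numerator and denominator: the numerator becomes $a = a_1 + a_{-1} = (x+1)a_{-1}$, and the denominator becomes $(2x+1)a_{-1} + (b_1 + 2c_1 + d_1 + c_{-1})$. Second, since $b_1, c_1, d_1, c_{-1}$ are counts and hence nonnegative, the bracketed remainder $b_1 + 2c_1 + d_1 + c_{-1}$ is $\ge 0$, so the denominator is at least $(2x+1)a_{-1}$; because $x \ge 0$ (it is a ratio of nonnegative counts) we have $2x+1 > 0$, and the denominator is positive (as already noted in the proof of Theorem~\ref{KBoundDeltaF}), so dividing is legitimate and yields $\frac{a}{h+d_A+d_B} \le \frac{(x+1)a_{-1}}{(2x+1)a_{-1}} = \frac{x+1}{2x+1}$. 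The one separate case is $a_{-1}=0$, which forces $a_1 = x a_{-1} = 0$, hence $a=0$ and the ratio is $0 \le \frac{x+1}{2x+1}$.

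There is no real obstacle here: the only ``hard part'' is the bookkeeping of recalling the exact form of $h+d_A+d_B$ and noting that every symbol appearing in the denominator is a nonnegative count, which is precisely what makes dropping the remainder $b_1 + 2c_1 + d_1 + c_{-1}$ valid. It is worth remarking, for the more general Theorem to follow, that the bound is attained (with $a_{-1}>0$) exactly when $b_1 = c_1 = d_1 = c_{-1} = 0$, the same degenerate configuration flagged after Theorem~\ref{KBoundDeltaFAssumingPerfectPrecision}, and that letting $x \to \infty$ (i.e. $a_{-1} \to 0$ relative to $a_1$) sends $\frac{x+1}{2x+1} \to \frac{1}{2}$, recovering the factor used in Theorem~\ref{KBoundDeltaFAssumingPerfectPrecision} and confirming that that theorem is the stated limiting special case.
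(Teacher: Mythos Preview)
Your argument is correct and follows essentially the same route as the paper: substitute $a_1 = x a_{-1}$ so that $a = (x+1)a_{-1}$, bound the denominator $h+d_A+d_B$ below by $(2x+1)a_{-1}$ by dropping the nonnegative remainder, and simplify. If anything you are more careful than the paper, which tacitly cancels $a_{-1}$ without separately noting the $a_{-1}=0$ case.
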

\begin{proof}
$a_1 = xa_{-1}$ by hypothesis. $a = a_1+a_{-1}$ by definition of contingency table counts. 
Hence, $a = (x+1)a_{-1}$. Therefore, 
\begin{eqnarray}
\frac{a}{h+d_A+d_B} \le & \frac{(x+1)a_{-1}}{2xa_{-1}+a_{-1}} \\ \nonumber
= & \frac{(x+1)a_{-1}}{(2x+1)a_{-1}} \\ \nonumber
= & \frac{x+1}{2x+1}. \qed
\end{eqnarray}
\end{proof}

The following Theorem generalizes Theorem~\ref{KBoundDeltaFAssumingPerfectPrecision} to cases when $M_{PositiveConjunction}$ has 
precision $p$ in $(0,1)$.\footnote{The case when $p=0$ is handled by Theorem~\ref{KBoundDeltaF} and the case when $p=1$ is handled 
by Theorem~\ref{KBoundDeltaFAssumingPerfectPrecision}.}

\begin{theorem} \label{KBoundDeltaFAssumingPrecisionP}
Suppose $M_t$, $M_{t-1}$, $K_t$, $\tilde{F}_t$, $\tilde{F}_{t-1}$, $\Delta F_t$, and $T$ are defined as stated in the statement of Theorem~\ref{KBoundDeltaF}. 
Let the contingency tables be defined as they were in the proof of Theorem~\ref{KBoundDeltaF}. 
Let $M_{PositiveConjunction}$ be a model that only classifies an example as positive if both models $M_t$ and $M_{t-1}$ classify the example as positive. 
Suppose that $M_{PositiveConjunction}$ has precision p on the stop set. 
Then $K_t > T \Rightarrow |\Delta F_t| \le \frac{4(1-T)}{(p+1)T}$. 
\end{theorem}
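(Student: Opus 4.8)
The plan is to reuse the proof of Theorem~\ref{KBoundDeltaF} essentially verbatim up through Equality~\ref{rightBeforeFinalResult}, which establishes, under the hypothesis $K_t > T$, that
\[
|\Delta F_t| \;\le\; \frac{4(1-T)}{T}\cdot\frac{a}{h+d_A+d_B},
\]
and then to replace the crude estimate $\frac{a}{h+d_A+d_B}\le 1$ used at the end of that proof by a sharper bound that exploits the precision assumption, exactly in the spirit of Theorem~\ref{KBoundDeltaFAssumingPerfectPrecision} (which handled the extreme case $p=1$, giving the factor $\frac12$ in place of $1$).

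First I would identify the precision of $M_{PositiveConjunction}$ in terms of the contingency-table counts. From Tables~\ref{t:sampleCounts}, \ref{t:positives}, and \ref{t:negatives}, $M_{PositiveConjunction}$ labels exactly $a$ stop-set examples positive, of which $a_1$ are truly positive and $a_{-1}$ are truly negative; hence its precision on the stop set is $p = a_1/(a_1+a_{-1})$. Since $p\in(0,1)$, we have $a_{-1}>0$ and may solve for the ratio to get $a_1 = \frac{p}{1-p}\,a_{-1}$, i.e.\ $a_1 = x\,a_{-1}$ with $x = \frac{p}{1-p}$. This is precisely the hypothesis needed to apply Lemma~\ref{helperLemma}.

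Next I would invoke Lemma~\ref{helperLemma} with this value of $x$ to obtain $\frac{a}{h+d_A+d_B}\le \frac{x+1}{2x+1}$, and then simplify the right-hand side: substituting $x=\frac{p}{1-p}$ gives $x+1 = \frac{1}{1-p}$ and $2x+1 = \frac{p+1}{1-p}$, so $\frac{x+1}{2x+1} = \frac{1}{p+1}$. Plugging $\frac{a}{h+d_A+d_B}\le\frac{1}{p+1}$ into the displayed inequality above yields $|\Delta F_t|\le\frac{4(1-T)}{(p+1)T}$, which is the claim.

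There is no deep obstacle here; the content is bookkeeping plus one short algebraic identity. The points needing care are (i) translating the precision hypothesis into the algebraic form $a_1 = xa_{-1}$ required by Lemma~\ref{helperLemma}, together with the remark that $p\in(0,1)$ guarantees $a_{-1}>0$ and $a_1>0$ so that the division and the lemma are legitimate, and (ii) the simplification $\frac{x+1}{2x+1}=\frac{1}{p+1}$. As a sanity check one can verify the two limiting regimes: $p\to 0$ forces $x\to 0$ and $\frac{x+1}{2x+1}\to 1$, recovering Theorem~\ref{KBoundDeltaF}, while $p\to 1$ forces $x\to\infty$ and $\frac{x+1}{2x+1}\to\frac12$, recovering Theorem~\ref{KBoundDeltaFAssumingPerfectPrecision}.
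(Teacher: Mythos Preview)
Your proposal is correct and matches the paper's own proof essentially step for step: reuse Theorem~\ref{KBoundDeltaF} through Equality~\ref{rightBeforeFinalResult}, translate the precision hypothesis into $a_1=\frac{p}{1-p}a_{-1}$, apply Lemma~\ref{helperLemma} with $x=\frac{p}{1-p}$, and simplify $\frac{x+1}{2x+1}=\frac{1}{p+1}$. Your added remarks about why $p\in(0,1)$ makes the division legitimate and the limiting-case sanity checks are helpful extras not spelled out in the paper.
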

\begin{proof}
The proof of Theorem~\ref{KBoundDeltaF} holds exactly as it is up until Equality~\ref{rightBeforeFinalResult}.  
$M_{PositiveConjunction}$ has precision $p$ on the stop set $\Rightarrow p = \frac{a_1}{a_1+a_{-1}}$.  
Solving for $a_1$ in terms of $a_{-1}$ we have $a_1 = \frac{p}{1-p}a_{-1}$. 
Therefore, applying Lemma~\ref{helperLemma} with $x=\frac{p}{1-p}$, we have $\frac{a}{h+d_A+d_B} \le \frac{\frac{p}{1-p}+1}{\frac{2p}{1-p}+1}$. 
Therefore we have 
\begin{eqnarray} 
|\Delta F_t| \le & 4 \Bigg(\frac{\frac{p}{1-p}+1}{\frac{2p}{1-p}+1} \Bigg)\frac{(1-T)}{T} \\ 
= & \frac{4(1-T)}{(p+1)T}. \label{finalResultAssumingPrecisionP}  \; \qed
\end{eqnarray} 
\end{proof}

The scaling factor $\frac{1}{p+1}$ in Theorem~\ref{KBoundDeltaFAssumingPrecisionP} shows how the precision of the conjunctive model affects the bound. 
Theorem~\ref{KBoundDeltaF} had the scaling factor implicitly set to 1 in order to handle the pathological case where the positive conjunctive model has precision = 0. 
In Theorem~\ref{KBoundDeltaFAssumingPerfectPrecision}, where the positive conjunctive model has precision = 1 on the examples in the stop set, the 
scaling factor is set to 1/2. 
Theorem~\ref{KBoundDeltaFAssumingPrecisionP} generalizes the scaling factor so that it is a function of the precision of the positive conjunctive model. 
For convenience, Table~\ref{t:scalingFactorValues} shows the scaling factor values for a few different precision values.

\begin{table}
\begin{center}
\begin{tabular}{|c|c|c|} \hline
Precision & $\frac{1}{p+1}$ (to 3 decimal places)\\ \hline
50\%      & 0.667 \\ \hline
80\%      & 0.556 \\ \hline
90\%      & 0.526 \\ \hline
95\%      & 0.513 \\ \hline
98\%      & 0.505 \\ \hline
99\%      & 0.503 \\ \hline
99.9\%    & 0.500 \\ \hline
\end{tabular}
\end{center}
\caption{\label{t:scalingFactorValues} Values of the scaling factor from Theorem~\ref{KBoundDeltaFAssumingPrecisionP} for different precision values. }
\end{table}

The bounds in Theorems~\ref{KBoundDeltaF}, \ref{KBoundDeltaFAssumingPerfectPrecision}, and \ref{KBoundDeltaFAssumingPrecisionP} all bound the difference 
in performance \emph{on the stop set} of two consecutively learned models $M_t$ and $M_{t-1}$. 
An issue to consider is how connected the difference in performance on the stop set is to the difference in performance on a stream of application examples
generated according to the population probabilities. 
Taking up this issue, consider that the proof of Theorems~\ref{KBoundDeltaF}, \ref{KBoundDeltaFAssumingPerfectPrecision}, 
and \ref{KBoundDeltaFAssumingPrecisionP} would hold as it is if we had used sample proportions instead of sample counts 
(this can be seen by simply dividing every count by $n$, the size of the stop set).  
Since the stop set is unbiased (selected at random from the population), as $n$ approaches infinity, the sample proportions will approach the population
probabilities and the difference between the difference in performance between $M_t$ and $M_{t-1}$ on the stop set and on a stream of application examples
generated according to the population probabilities will approach zero.  

\section{Conclusions} \label{conclusions}

To date, the work on stopping criteria has been dominated by heuristics based on intuitions and experimental success on a small handful of tasks and datasets. 
But the methods are not widely usable in practice yet because our community's understanding of the stopping methods remains too inexact.  
Pushing forward on understanding the mechanics of stopping at a more exact level is therefore crucial for achieving the design of widely usable effective stopping criteria.

This paper presented the first theoretical analysis of stopping based on stabilizing predictions.  The analysis revealed three elements that are central to the 
SP method's success: 
(1) the sample estimates of Kappa have low variance; 
(2) Kappa has tight connections with differences in F-measure; and 
(3) since the stop set doesn't have to be labeled, it can be arbitrarily large, helping to guarantee that the results transfer to
previously unseen streams of examples at test/application time. 

We presented proofs of relationships between the level of Kappa agreement and the difference in performance between consecutive models. 
Specifically, if the Kappa agreement between two models is at least T, then the difference in F-measure performance between those 
models is bounded above by $\frac{4(1-T)}{T}$. 
If precision of the positive conjunction of the models is assumed to be $p$, then the bound can be tightened to $\frac{4(1-T)}{(p+1)T}$. 

The setup and methodology of the proofs can serve as a launching pad for many further investigations, including: 
analyses of stopping; works that consider switching between different active learning strategies and operating regions; 
and works that consider stopping co-training, and especially agreement-based co-training.
Finally, the relationships that have been exposed between the Kappa statistic and F-measure may be of broader use in works that consider inter-annotator
agreement and its interplay with system evaluation, a topic that has been of long-standing interest. 

\bibliographystyle{acl}
\bibliography{paper}

\begin{thebibliography}{}

\bibitem[\protect\citename{Argamon-Engelson and
  Dagan}1999]{argamon-engelson1999}
Shlomo Argamon-Engelson and Ido Dagan.
\newblock 1999.
\newblock Committee-based sample selection for probabilistic classifiers.
\newblock {\em Journal of Artificial Intelligence Research (JAIR)},
  11:335--360.

\bibitem[\protect\citename{Artstein and Poesio}2008]{artstein2008}
Ron Artstein and Massimo Poesio.
\newblock 2008.
\newblock Inter-coder agreement for computational linguistics.
\newblock {\em Computational Linguistics}, 34(4):555--596.

\bibitem[\protect\citename{Baldridge and Osborne}2008]{baldridge2008}
Jason Baldridge and Miles Osborne.
\newblock 2008.
\newblock Active learning and logarithmic opinion pools for hpsg parse
  selection.
\newblock {\em Nat. Lang. Eng.}, 14(2):191--222.

\bibitem[\protect\citename{Baram \bgroup et al.\egroup }2004]{baram2004}
Yoram Baram, Ran El-Yaniv, and Kobi Luz.
\newblock 2004.
\newblock Online choice of active learning algorithms.
\newblock {\em Journal of Machine Learning Research}, 5:255--291, March.

\bibitem[\protect\citename{Bishop \bgroup et al.\egroup }1975]{bishop1975}
Yvonne~M. Bishop, Stephen~E. Fienberg, and Paul~W. Holland.
\newblock 1975.
\newblock {\em Discrete Multivariate Analysis: Theory and Practice}.
\newblock MIT Press, Cambridge, MA.

\bibitem[\protect\citename{Bloodgood and Callison-Burch}2010]{bloodgood2010}
Michael Bloodgood and Chris Callison-Burch.
\newblock 2010.
\newblock Bucking the trend: Large-scale cost-focused active learning for
  statistical machine translation.
\newblock In {\em Proceedings of the 48th Annual Meeting of the Association for
  Computational Linguistics}, pages 854--864, Uppsala, Sweden, July.
  Association for Computational Linguistics.

\bibitem[\protect\citename{Bloodgood and Vijay-Shanker}2009a]{bloodgood2009b}
Michael Bloodgood and K~Vijay-Shanker.
\newblock 2009a.
\newblock A method for stopping active learning based on stabilizing
  predictions and the need for user-adjustable stopping.
\newblock In {\em Proceedings of the Thirteenth Conference on Computational
  Natural Language Learning (CoNLL-2009)}, pages 39--47, Boulder, Colorado,
  June. Association for Computational Linguistics.

\bibitem[\protect\citename{Bloodgood and Vijay-Shanker}2009b]{bloodgood2009a}
Michael Bloodgood and K~Vijay-Shanker.
\newblock 2009b.
\newblock Taking into account the differences between actively and passively
  acquired data: The case of active learning with support vector machines for
  imbalanced datasets.
\newblock In {\em Proceedings of Human Language Technologies: The 2009 Annual
  Conference of the North American Chapter of the Association for Computational
  Linguistics}, pages 137--140, Boulder, Colorado, June. Association for
  Computational Linguistics.

\bibitem[\protect\citename{Bloodgood}2009]{bloodgood2009c}
Michael Bloodgood.
\newblock 2009.
\newblock {\em Active learning with support vector machines for imbalanced
  datasets and a method for stopping active learning based on stabilizing
  predictions}.
\newblock {Ph.D.} thesis, University of Delaware, Newark, DE, USA.

\bibitem[\protect\citename{Blum and Mitchell}1998]{blum1998}
Avrim Blum and Tom Mitchell.
\newblock 1998.
\newblock Combining labeled and unlabeled data with co-training.
\newblock In {\em COLT' 98: Proceedings of the eleventh annual conference on
  Computational learning theory}, pages 92--100, New York, NY, USA. ACM.

\bibitem[\protect\citename{Carletta}1996]{carletta1996}
J.~Carletta.
\newblock 1996.
\newblock Assessing agreement on classification tasks: The kappa statistic.
\newblock {\em Computational linguistics}, 22(2):249--254.

\bibitem[\protect\citename{Clark \bgroup et al.\egroup }2003]{clark2003}
Stephen Clark, James Curran, and Miles Osborne.
\newblock 2003.
\newblock Bootstrapping pos-taggers using unlabelled data.
\newblock In Walter Daelemans and Miles Osborne, editors, {\em Proceedings of
  the Seventh Conference on Natural Language Learning at HLT-NAACL 2003}, pages
  49--55.

\bibitem[\protect\citename{Cohen}1960]{cohen1960}
J.~Cohen.
\newblock 1960.
\newblock A coefficient of agreement for nominal scales.
\newblock {\em Educational and Psychological Measurement}, 20:37--46.

\bibitem[\protect\citename{Cohn \bgroup et al.\egroup }1996]{cohn1996}
David~A. Cohn, Zoubin Ghahramani, and Michael~I. Jordan.
\newblock 1996.
\newblock Active learning with statistical models.
\newblock {\em Journal of Artificial Intelligence Research}, 4:129--145.

\bibitem[\protect\citename{D{\"o}nmez \bgroup et al.\egroup }2007]{donmez2007}
Meryem~Pinar D{\"o}nmez, Jaime~G. Carbonell, and Paul~N. Bennett.
\newblock 2007.
\newblock Dual strategy active learning.
\newblock In Joost~N. Kok, Jacek Koronacki, Ramon~L{\'o}pez de~M{\'a}ntaras,
  Stan Matwin, Dunja Mladenic, and Andrzej Skowron, editors, {\em Machine
  Learning: {ECML} 2007, 18th European Conference on Machine Learning, Warsaw,
  Poland, September 17-21, 2007, Proceedings}, volume 4701 of {\em Lecture
  Notes in Computer Science}, pages 116--127. Springer.

\bibitem[\protect\citename{Ertekin \bgroup et al.\egroup }2007a]{ertekin2007b}
Seyda Ertekin, Jian Huang, L{\'e}on Bottou, and C.~Lee Giles.
\newblock 2007a.
\newblock Learning on the border: active learning in imbalanced data
  classification.
\newblock In M{\'a}rio~J. Silva, Alberto H.~F. Laender, Ricardo~A. Baeza-Yates,
  Deborah~L. McGuinness, Bj{\o}rn Olstad, {\O}ystein~Haug Olsen, and
  Andr{\'e}~O. Falc{\~a}o, editors, {\em Proceedings of the Sixteenth {ACM}
  Conference on Information and Knowledge Management, {CIKM} 2007, Lisbon,
  Portugal, November 6-10, 2007}, pages 127--136. ACM.

\bibitem[\protect\citename{Ertekin \bgroup et al.\egroup }2007b]{ertekin2007a}
Seyda Ertekin, Jian Huang, and C.~Lee Giles.
\newblock 2007b.
\newblock Active learning for class imbalance problem.
\newblock In Wessel Kraaij, Arjen~P. de~Vries, Charles L.~A. Clarke, Norbert
  Fuhr, and Noriko Kando, editors, {\em {SIGIR} 2007: Proceedings of the 30th
  Annual International {ACM} {SIGIR} Conference on Research and Development in
  Information Retrieval, Amsterdam, The Netherlands, July 23-27, 2007}, pages
  823--824. ACM.

\bibitem[\protect\citename{Fleiss \bgroup et al.\egroup }1969]{fleiss1969}
Joseph~L. Fleiss, Jacob Cohen, and B.~S. Everitt.
\newblock 1969.
\newblock Large sample standard errors of kappa and weighted kappa.
\newblock {\em Psychological Bulletin}, 72(5):323 -- 327.

\bibitem[\protect\citename{Freund \bgroup et al.\egroup }1997]{freund1997}
Yoav Freund, H.~Sebastian Seung, Eli Shamir, and Naftali Tishby.
\newblock 1997.
\newblock Selective sampling using the query by committee algorithm.
\newblock {\em Machine Learning}, 28:133--168.

\bibitem[\protect\citename{Ghayoomi}2010]{ghayoomi2010}
Masood Ghayoomi.
\newblock 2010.
\newblock Using variance as a stopping criterion for active learning of frame
  assignment.
\newblock In {\em Proceedings of the NAACL HLT 2010 Workshop on Active Learning
  for Natural Language Processing}, pages 1--9, Los Angeles, California, June.
  Association for Computational Linguistics.

\bibitem[\protect\citename{Hachey \bgroup et al.\egroup }2005]{hachey2005}
Ben Hachey, Beatrice Alex, and Markus Becker.
\newblock 2005.
\newblock Investigating the effects of selective sampling on the annotation
  task.
\newblock In {\em Proceedings of the Ninth Conference on Computational Natural
  Language Learning (CoNLL-2005)}, pages 144--151, Ann Arbor, Michigan, June.
  Association for Computational Linguistics.

\bibitem[\protect\citename{Haertel \bgroup et al.\egroup }2008]{haertel2008}
Robbie Haertel, Eric Ringger, Kevin Seppi, James Carroll, and Peter McClanahan.
\newblock 2008.
\newblock Assessing the costs of sampling methods in active learning for
  annotation.
\newblock In {\em Proceedings of ACL-08: HLT, Short Papers}, pages 65--68,
  Columbus, Ohio, June. Association for Computational Linguistics.

\bibitem[\protect\citename{Haffari and Sarkar}2009]{haffari2009b}
Gholamreza Haffari and Anoop Sarkar.
\newblock 2009.
\newblock Active learning for multilingual statistical machine translation.
\newblock In {\em Proceedings of the Joint Conference of the 47th Annual
  Meeting of the ACL and the 4th International Joint Conference on Natural
  Language Processing of the AFNLP}, pages 181--189, Suntec, Singapore, August.
  Association for Computational Linguistics.

\bibitem[\protect\citename{Hale and Fleiss}1993]{hale1993}
Cecilia~A. Hale and Joseph~L. Fleiss.
\newblock 1993.
\newblock Interval estimation under two study designs for kappa with binary
  classifications.
\newblock {\em Biometrics}, 49(2):pp. 523--534.

\bibitem[\protect\citename{Hwa}2000]{hwa2000}
Rebecca Hwa.
\newblock 2000.
\newblock Sample selection for statistical grammar induction.
\newblock In Hinrich Sch{\"u}tze and Keh-Yih Su, editors, {\em Proceedings of
  the 2000 Joint {SIGDAT} Conference on Empirical Methods in Natural Language
  Processing}, pages 45--53. Association for Computational Linguistics,
  Somerset, New Jersey.

\bibitem[\protect\citename{Laws and Sch\"{u}tze}2008]{laws2008}
Florian Laws and Hinrich Sch\"{u}tze.
\newblock 2008.
\newblock Stopping criteria for active learning of named entity recognition.
\newblock In {\em Proceedings of the 22nd International Conference on
  Computational Linguistics (Coling 2008)}, pages 465--472, Manchester, UK,
  August. Coling 2008 Organizing Committee.

\bibitem[\protect\citename{Lewis and Gale}1994]{lewis1994}
David~D. Lewis and William~A. Gale.
\newblock 1994.
\newblock A sequential algorithm for training text classifiers.
\newblock In {\em SIGIR '94: Proceedings of the 17th annual international ACM
  SIGIR conference on Research and development in information retrieval}, pages
  3--12, New York, NY, USA. Springer-Verlag New York, Inc.

\bibitem[\protect\citename{Roth and Small}2008]{roth2008}
D.~Roth and K.~Small.
\newblock 2008.
\newblock Active learning for pipeline models.
\newblock In {\em Proceedings of the National Conference on Artificial
  Intelligence (AAAI)}, pages 683--688.

\bibitem[\protect\citename{Roy and McCallum}2001]{roy2001}
Nicholas Roy and Andrew McCallum.
\newblock 2001.
\newblock Toward optimal active learning through sampling estimation of error
  reduction.
\newblock In {\em In Proceedings of the 18th International Conference on
  Machine Learning}, pages 441--448. Morgan Kaufmann.

\bibitem[\protect\citename{Sassano}2002]{sassano2002}
Manabu Sassano.
\newblock 2002.
\newblock An empirical study of active learning with support vector machines
  for japanese word segmentation.
\newblock In {\em ACL '02: Proceedings of the 40th Annual Meeting on
  Association for Computational Linguistics}, pages 505--512, Morristown, NJ,
  USA. Association for Computational Linguistics.

\bibitem[\protect\citename{Schohn and Cohn}2000]{schohn2000}
Greg Schohn and David Cohn.
\newblock 2000.
\newblock Less is more: {A}ctive learning with support vector machines.
\newblock In {\em Proc. 17th International Conf. on Machine Learning}, pages
  839--846. Morgan Kaufmann, San Francisco, CA.

\bibitem[\protect\citename{Settles and Craven}2008]{settles2008}
Burr Settles and Mark Craven.
\newblock 2008.
\newblock An analysis of active learning strategies for sequence labeling
  tasks.
\newblock In {\em Proceedings of the 2008 Conference on Empirical Methods in
  Natural Language Processing}, pages 1070--1079, Honolulu, Hawaii, October.
  Association for Computational Linguistics.

\bibitem[\protect\citename{Seung \bgroup et al.\egroup }1992]{seung1992}
H.~S. Seung, M.~Opper, and H.~Sompolinsky.
\newblock 1992.
\newblock Query by committee.
\newblock In {\em COLT '92: Proceedings of the fifth annual workshop on
  Computational learning theory}, pages 287--294, New York, NY, USA. ACM.

\bibitem[\protect\citename{Shen \bgroup et al.\egroup }2004]{shen2004}
Dan Shen, Jie Zhang, Jian Su, Guodong Zhou, and Chew-Lim Tan.
\newblock 2004.
\newblock Multi-criteria-based active learning for named entity recognition.
\newblock In {\em Proceedings of the 42nd Meeting of the Association for
  Computational Linguistics (ACL'04), Main Volume}, pages 589--596, Barcelona,
  Spain, July.

\bibitem[\protect\citename{Thompson \bgroup et al.\egroup }1999]{thompson1999}
Cynthia~A. Thompson, Mary~Elaine Califf, and Raymond~J. Mooney.
\newblock 1999.
\newblock Active learning for natural language parsing and information
  extraction.
\newblock In {\em In Proceedings of the 16th International Conference on
  Machine Learning}, pages 406--414. Morgan Kaufmann, San Francisco, CA.

\bibitem[\protect\citename{Tomanek \bgroup et al.\egroup }2007]{tomanek2007}
Katrin Tomanek, Joachim Wermter, and Udo Hahn.
\newblock 2007.
\newblock An approach to text corpus construction which cuts annotation costs
  and maintains reusability of annotated data.
\newblock In {\em Proceedings of the 2007 Joint Conference on Empirical Methods
  in Natural Language Processing and Computational Natural Language Learning
  (EMNLP-CoNLL)}, pages 486--495.

\bibitem[\protect\citename{Vlachos}2008]{vlachos2008}
Andreas Vlachos.
\newblock 2008.
\newblock A stopping criterion for active learning.
\newblock {\em Computer Speech and Language}, 22(3):295--312.

\bibitem[\protect\citename{Zhu and Hovy}2007]{zhu2007}
Jingbo Zhu and Eduard Hovy.
\newblock 2007.
\newblock Active learning for word sense disambiguation with methods for
  addressing the class imbalance problem.
\newblock In {\em Proceedings of the 2007 Joint Conference on Empirical Methods
  in Natural Language Processing and Computational Natural Language Learning
  (EMNLP-CoNLL)}, pages 783--790.

\bibitem[\protect\citename{Zhu \bgroup et al.\egroup }2008a]{zhu2008a}
Jingbo Zhu, Huizhen Wang, and Eduard Hovy.
\newblock 2008a.
\newblock Learning a stopping criterion for active learning for word sense
  disambiguation and text classification.
\newblock In {\em IJCNLP}.

\bibitem[\protect\citename{Zhu \bgroup et al.\egroup }2008b]{zhu2008b}
Jingbo Zhu, Huizhen Wang, and Eduard Hovy.
\newblock 2008b.
\newblock Multi-criteria-based strategy to stop active learning for data
  annotation.
\newblock In {\em Proceedings of the 22nd International Conference on
  Computational Linguistics (Coling 2008)}, pages 1129--1136, Manchester, UK,
  August. Coling 2008 Organizing Committee.

\end{thebibliography}

\end{document}